
\documentclass[nohyperref]{article}

\usepackage{microtype}
\usepackage{graphicx}
\usepackage{subfigure}
\usepackage{booktabs} 

\usepackage{hyperref}



\usepackage[accepted]{icml2022}

\usepackage{amsmath}
\usepackage{amssymb}
\usepackage{mathtools}
\usepackage{amsthm}

\usepackage[capitalize,noabbrev]{cleveref}

\usepackage{amsmath}
\usepackage{amssymb}
\usepackage{amsthm}
\usepackage{graphicx}
\usepackage{enumerate}
\usepackage{microtype}
\usepackage{mleftright}
\usepackage{mdframed}
\usepackage{authblk}
\usepackage{tcolorbox}
\usepackage[english]{babel}
\usepackage[utf8]{inputenc}
\usepackage{xspace}

\makeatletter
\renewcommand{\ALG@name}{Protocol}
\makeatother
\allowdisplaybreaks

\graphicspath{ {./figures/} }

\setlength{\arrayrulewidth}{0.5mm}





\theoremstyle{plain}
\newtheorem{theorem}{Theorem}[section]

\newtheorem{lemma}[theorem]{Lemma}

\theoremstyle{definition}
\newtheorem{definition}[theorem]{Definition}

\theoremstyle{remark}

\newtheorem{observation}[theorem]{Observation}
\newtheorem*{theorem*}{Theorem}


\providecommand{\poly}{{\mathsf{poly}}}
\providecommand{\OPT}{{\mathit{OPT}}}
\DeclareMathOperator{\DISJ}{DISJ}
\DeclareMathOperator{\polylog}{\mathsf{polylog}}

\newcommand{\BoostAttempt}{\textsf{BoostAttempt}\xspace}
\newcommand{\AccuratelyClassify}{\textsf{AccuratelyClassify}\xspace}
\newcommand{\AdaBoost}{AdaBoost\xspace}

\usepackage[textsize=tiny]{todonotes}

\icmltitlerunning{A Resilient Distributed Boosting Algorithm}

\begin{document}

\twocolumn[
\icmltitle{A Resilient Distributed Boosting Algorithm}



\icmlsetsymbol{equal}{*}

\begin{icmlauthorlist}
\icmlauthor{Yuval Filmus}{equal,TeCS}
\icmlauthor{Idan Mehalel}{equal,TeCS}
\icmlauthor{Shay Moran}{equal,TeM,G}
\end{icmlauthorlist}

\icmlaffiliation{TeCS}{The Henry and Marilyn Taub Faculty of Computer Science, Technion, Haifa, Israel}
\icmlaffiliation{TeM}{Faculty of Mathematics, Technion, Haifa, Israel}
\icmlaffiliation{G}{Google Research, Israel}

\icmlcorrespondingauthor{Idan Mehalel}{idanmehalel@cs.technion.ac.il}

\icmlkeywords{Machine Learning, ICML}

\vskip 0.3in
]



\printAffiliationsAndNotice{\icmlEqualContribution} 

\begin{abstract}
Given a learning task where the data is distributed among several parties, communication is one of the fundamental resources which the parties would like to minimize.
We present a distributed boosting algorithm which is resilient to a limited amount of noise. Our algorithm is similar to classical boosting algorithms, although it is equipped with a new component, inspired by Impagliazzo's hard-core lemma \cite{impagliazzo1995hard}, adding a robustness quality to the algorithm. We also complement this result by showing that resilience to any asymptotically larger noise is not achievable by a communication-efficient algorithm.




\end{abstract}

\section{Introduction}


Most work in learning theory focuses on designing efficient learning algorithms which generalize well. New considerations arise when the training data is spread among several parties: speech recorded on different smartphones, medical data gathered from several clinics, and so on. In such settings, it is important to minimize not only the computational complexity, but also the communication complexity. Apart from practical considerations of limited bandwidth, minimizing the communication complexity also limits the amount of data being exposed to prying ears. This motivates designing distributed learning algorithms, which improve on the naive idea of sending all training data to a single party. 


In the classical PAC model, distributed learning has been studied mostly in the realizable setting, where it was shown that distributed implementations of boosting algorithms 
can learn any VC class with communication complexity which is polynomial in the description length (in bits) of a single example~\citep*{balcan2012distributed,daume2012efficient,kane2019communication}.

\smallskip

In this work we deviate from the realizable setting and allow a small amount of noise in the input sample. In our setting there are $k$ players and a center, who are given a domain~$U$ of size $\lvert U\rvert = n$ and a concept class $\mathcal{H}$ over $U$ with VC dimension $d \ll n$. For a labelled input sample~$S$ distributed among the players, let $\OPT:=\OPT(S)\in\mathbb{N}$ denote the number of examples in~$S$ which are misclassified by the best hypothesis in $\mathcal{H}$. In most parts of the paper, we require that~$\OPT \in \polylog n$.
The goal of the parties is to learn together a classifier $f$ that has at most~$\OPT$ errors on~$S$, while using $\poly(d, k, \log |S|, \log n)$ bits of communication. Note that~$\log n$ is the number of bits needed to encode a single point in $U$, and thus $\polylog n$ means polynomial in the description length of a single example.\footnote{We refer the reader to \citep*{kane2019communication,braverman2019convex} for a more thorough discussion regarding the choice of $\polylog n$ as a ``yardstick'' for communication efficiency.}

\paragraph{Main result.}
Our main result, formally stated in Theorems \ref{theo:upp_acc_int} and \ref{theo:low_acc_opt_int} asserts the following:
for every VC class, if the minimal error of an hypothesis satisfies $\OPT\in \polylog n$, 
then a simple robust variant of classical boosting learns it with $\poly(d, k, \log |S|, \log n)$ communication complexity.
Conversely, when $\OPT \notin \polylog n$, there exist one-dimensional VC classes for which \emph{any} learning algorithm has super-polylogarithmic communication complexity.

The novelty of our algorithm lies in a non-standard usage of boosting that identifies small ``hard'' sets for which any hypothesis from the class has large error. This kind of usage resembles (and is inspired by) Impagliazzo's hard-core lemma \citep*{impagliazzo1995hard}, in particular its proof using the method of multiplicative weights. Our negative result is a slight extension of the argument appearing in~\citep*{kane2019communication}.

\smallskip

We note that our positive result can alternatively be obtained by a reduction to \emph{semi-agnostic learning} \citep*{bun2019private}, that is, agreeing on a classifier with at most $c \cdot \OPT$ errors for some constant~$c$. Semi-agnostic learning is possible using $\poly(d, k, \log |S|, \log n)$ bits of communication by the works of \citet*{balcan2012distributed, chen2016communication}. 
Given a semi-agnostic communication protocol with a constant approximation factor $c$ and communication complexity $\poly(d, k, \log |S|, \log n)$, one can proceed as follows: execute the semi-agnostic protocol to obtain a hypothesis $f$, and have each player broadcast her examples that $f$ misclassifies. Then, the players modify $f$ on the misclassified points and output an optimal hypothesis $f'$. If there exists an hypothesis in the class whose error is $\polylog(n)$ then the communication cost of this step is $\poly(d, k, \log |S|, \log n)$, and thus the overall communication complexity is $\poly(d, k, \log |S|, \log n)$.

The advantage of our approach is the simplicity of our protocol, which is a simple modification of the classical boosting approach that makes it resilient to mild noise.
This is in contrast with semi-agnostic learning protocol which rely on non-trivial subroutines
(e.g.\ the distributed implementation of Bregman projection in the protocol by \citet*{chen2016communication}).

\paragraph{Empirical loss versus population loss.}
From a technical perspective, this work focuses on 
\emph{distributed empirical risk minimization with efficient communication complexity}; that is, the objective is to design an efficient distributed protocol which minimizes the empirical loss.

While this deviates from the main objective in statistical learning of minimizing the \emph{population loss}, we focus on the empirical loss for the following reasons:
\begin{enumerate}[(i)]
\item \emph{Efficient communication implies generalization}: 
As discussed in~\citep*{kane2019communication,braverman2019convex},
Occam's razor and sample compression arguments can be naturally used
to bound the \emph{generalization gap} --- i.e.\ the absolute difference between the empirical and population losses --- of efficient distributed learning algorithms.
In a nutshell, the bound follows by arguing that the
output hypothesis is determined by the communication transcript of the protocol.
Hence, the communication complexity of the protocol upper-bounds the \emph{description length} of the output hypothesis, which translates to a bound on the generalization
gap via Occam's razor or sample compression.
In particular, this reasoning applies to the algorithm we present in this work, and hence it generalizes. 
Thus, for communication-efficient protocols, the empirical loss is a good proxy of the population loss.
\item Focusing on empirical loss simplifies the exposition: while it is possible to translate our results to 
the setting of population loss, this introduces additional probabilistic machinery and complicates the presentation without introducing any new ideas. Further, Empirical risk minimization is a natural and classical problem, 
and previous work on distributed PAC learning focused on it, at least implicitly~\citep*{kane2019communication,vempala2020communication,braverman2019convex}.
\end{enumerate}

\paragraph{Paper organization.} In Section~\ref{sec:mod_res} we formally define the model and give an overview of our results and related work. Section~\ref{sec:prel} contains brief preliminaries.
We prove the upper bound in Section~\ref{sec:ub-exact} and the lower bound in Section~\ref{sec:lb-exact}. The paper closes with Section~\ref{sec:open}, which discusses directions for future research.

\section{Model and results} \label{sec:mod_res}

\subsection{Model}
Following~\citep*{balcan2012distributed}, 
    we consider a distributed setting consisting of~$k$ players 
    numbered $1,\dots,k$, and a center.
    Each player can communicate only with the center.  
    An hypothesis class $\mathcal{H}$ over a universe $\mathcal{U}$ is given, 
    and a finite domain set $U\subset \mathcal{U}$ of size $n$ is given as well.
    We denote the VC-dimension of~$\mathcal{H}$ by $d:=d(\mathcal{H})$.
    The finite domain $U$ is known in advance to the center and to all players. 
    A pair $z:=(x,y)$, where $x \in U$ and $y \in \{\pm 1\}$, is called an \emph{example}. 
    A sequence of examples $z_1, \dots, z_m$ is called a \emph{sample}, 
    and denoted by $S$. 
    For a classifier $f\colon \mathcal{U}\to\{\pm 1\}$, 
    let~$E_S(f)$ denote the number of examples in $S$ that $f$ misclassifies:
    \[ E_S(f):=\sum_{(x,y)\in S}1[f(x)\neq y]. \]
    Let $\OPT$ be the number of misclassified
    examples in $S$ with respect to the best hypothesis in $\mathcal{H}$:
\[
\OPT=\OPT(S,\mathcal{H}):=\min_{h\in \mathcal{H}}E_S(h).
\]

In most parts of the paper we require that $\OPT \in \polylog n$. In our setting, a sample $S$ is \emph{adversarially distributed} between the $k$ players into $k$ subsamples $S_1, \dots , S_k$. Note that the center gets no input.
We use the notation $S=\langle S_i \rangle_{i=1}^k$ 
    to clarify that player~$i$ has a fraction $S_i$ of the sample, 
    and concatenating all the $S_i$'s yields the entire input sample~$S$. 
   
The goal is to \emph{learn} $\mathcal{H}$, which we define as follows:

\begin{definition} \label{def:agn_lear}
Let $\mathcal{H}$ be a concept class over a (possibly infinite) universe $\mathcal{U}$ and let $k$ denote the number of players. 
For a function $T\colon\mathbb{N}\rightarrow \mathbb{N}$ we say that
$\mathcal{H}$ is \emph{learnable under the promise $\OPT \leq T(n)$}
if there exists a communication complexity bound $C(d,k,n,m)\in\poly(d,k,\log n,\log m)$ such that for every finite $U\subseteq\mathcal{U}$ of size $n=\lvert U\rvert$, 
there exists a distributed algorithm~$\pi=\pi(U)$ that satisfies the following.
For every input sample $S = \langle S_i \rangle_{i=1}^k$ with $m$ examples from $U$, if $\OPT=\OPT(S,\mathcal{H})\leq T(n)$ then the $k$ parties and the center
agree on an output hypothesis $f$ which satisfies $E_S(f) \leq \OPT$ with probability at least $\frac{2}{3}$ (over the randomness of the protocol $\pi$, when randomized), while transmitting at most $C(d,k,n,m)$ bits.
\end{definition}

Let us make a few remarks in order to clarify some choices made in the above definition.
\begin{enumerate}
    
    
    \item \textbf{Infinite classes}.
    The above definition allows one to handle natural infinite classes $\mathcal{H}$ such as Euclidean halfspaces. The finite subdomain $U\subseteq \mathcal{U}$ models a particular instance of the learning task defined by $\mathcal{H}$. For example, if $\mathcal{H}$ is the class of halfspaces in $\mathbb{R}^d$, and we use an encoding of real numbers with $B$ bits, then $U$ consists of all possible $2^{d\cdot B}$ points in $\mathbb{R}^d$ that can be encoded. The universal quantification over $U$ serves to make the definition scalable and independent of the encoding of the input points.
    \item \textbf{The protocol may depend on $\boldsymbol{U}\boldsymbol{\subseteq}\boldsymbol{\mathcal{U}}$.} This possible dependence reflects the fact that when designing algorithms in practice,
    one knows how the domain points are being encoded as inputs.\footnote{We remark however that the protocol appearing in Section~\ref{sec:ub-exact} is uniform in~$U$, that is, it can accept~$U$ as an additional input.}
\end{enumerate}

\subsection{Results}

Our positive result is stated in the following theorem.

\begin{theorem}[Positive Result] \label{theo:upp_acc_int}
Let $\mathcal{H}$ be a concept class with $d(\mathcal{H})<\infty$ and let $T=T(n)\in \polylog n$. 
Then, $\mathcal{H}$ is learnable under the promise $\OPT \leq T(n)$, and this is achieved by a simple variant of classical boosting.
Furthermore, the algorithm is deterministic and oblivious to $T$ and $\OPT$.
\end{theorem}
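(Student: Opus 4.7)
The plan is to execute a distributed version of Servedio-style \emph{smooth} boosting on the sample $S$, and then clean up with a short \AccuratelyClassify phase. The boosting loop follows the communication-efficient template of \citep*{balcan2012distributed,kane2019communication}: the center maintains a list $h_1,\dots,h_{t-1}\in\mathcal{H}$ of base hypotheses it has already broadcast, and this list implicitly defines the current weight vector $D_t$ over $S$ through a smoothed multiplicative-weights rule. Every player therefore computes $D_t$ restricted to her own subsample with no further communication. In each round she produces a locally best-fitting hypothesis $h_t^{(i)}\in\mathcal{H}$ and sends it to the center; the center collects the $k$ candidates, asks every player for the local weighted error of each, picks the candidate $h_t$ of smallest global weighted error, and broadcasts it. A round thus costs $\poly(d,k,\log n,\log m)$ bits and is fully deterministic.

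The key noise-resilience ingredient is smoothness: we insist that $D_t(x,y)\le \kappa/m$ for every example throughout the protocol, for a cap $\kappa$ grown slowly on the fly (which keeps the protocol oblivious to $\OPT$). On any $\kappa$-smooth distribution the optimal hypothesis $h^*\in\mathcal{H}$ puts weight at most $\kappa\cdot\OPT/m$ on its own errors, so whenever $\kappa\le m(1/2-\gamma)/\OPT$ the hypothesis $h^*$ itself is a valid weak learner of advantage $\gamma$, and the deterministic weak-learning step must return some $h_t$ of at least the same advantage. After $T=O(\log m/\gamma^2)=\polylog m$ rounds, the standard smooth-boosting analysis---closely related to the multiplicative-weights proof of Impagliazzo's hard-core lemma---shows that the aggregate $g:=\mathrm{sign}\bigl(\sum_t h_t\bigr)$ satisfies $E_S(g)=O(m/\kappa)=O(\OPT)=\polylog n$. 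In particular the set $B:=\{(x,y)\in S : g(x)\ne y\}$ has size $\polylog n$.

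The cleanup is now straightforward: each player sends $B\cap S_i$ to the center using $O(|B|\log n)=\polylog n$ bits, the center broadcasts $B$, and the parties output the hybrid classifier $f(x)=g(x)$ for $x\notin B$ and $f(x)=y^*(x)$ for $x\in B$, where $y^*(x)$ is the majority label of~$x$ in~$S$. On $S\setminus B$ the classifier $f$ is exact and on $B$ it is pointwise optimal, hence $E_S(f)\le E_S(h^*)=\OPT$. Summing the per-round cost $\poly(d,k,\log n,\log m)$ over $\polylog m$ rounds together with the $\polylog n$ cleanup cost yields the overall bound. The protocol uses no randomness and is oblivious to $T$ and $\OPT$ because $\kappa$ and the cleanup trigger are driven by observable execution statistics rather than prior knowledge. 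The main obstacle I anticipate is the deterministic distributed weak-learning step: showing that ``best-local-hypothesis from each player'' really recovers a global weak learner of advantage $\Omega(\gamma)$ whenever $h^*$ is one (previous randomized protocols handle this via sampling). I would address this with a uniform-convergence argument on $\mathcal{H}$ ensuring the $k$ local minimizers jointly approximate the global optimum, at the cost of at most a few extra boosting rounds.
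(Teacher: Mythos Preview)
Your high-level route---smooth-boost down to error $O(\OPT)$, then broadcast the few misclassified examples and patch---is precisely the alternative proof via reduction to semi-agnostic learning that the paper already acknowledges in the introduction; the paper's own argument is different. There each player sends a small $\epsilon$-approximation of her current weighted subsample to the center, which runs ERM on the pooled $O(kd)$-size sample. The hard-core insight is used not to cap weights but to \emph{detect failure}: when the center cannot find any $h\in\mathcal{H}$ with loss $\le 1/100$ on the pooled sample, that tiny sample is itself non-realizable, so deleting it from $S$ drops $\OPT$ by at least one. After at most $\OPT$ such restarts, ordinary AdaBoost succeeds on the residual realizable sample in $O(\log\lvert S\rvert)$ rounds. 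No smoothness cap, no dependence of the round count on $\kappa$, and no semi-agnostic subroutine.

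Your concrete implementation has two genuine gaps. First, the deterministic weak-learning step ``each player sends her local ERM hypothesis, center picks the globally best of the $k$ candidates'' can fail outright: with two players of equal total weight, $h^*$ may have weighted error $0.1$ on each (global error $0.1$), while each player's local minimizer has local error $0$ but error $0.9$ on the other player's data (global error $0.45$), so neither candidate is a weak learner. Your proposed uniform-convergence fix does not apply because the subsamples are adversarially partitioned, not i.i.d.\ from a common source; the standard remedy---and what the paper does---is to transmit $\epsilon$-approximations (which exist deterministically by VC theory) and let the center optimize. Second, the round count you claim is wrong: Servedio-style smooth boosting with cap $\kappa$ needs $\Theta(\kappa/\gamma^2)$ rounds to drive the error to $O(1/\kappa)$, so with $\kappa\approx m/\OPT$ that is $\Theta\bigl(m/(\OPT\,\gamma^2)\bigr)$ rounds, not $O(\log m/\gamma^2)$. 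Achieving semi-agnostic error in $\polylog$ rounds is exactly the non-trivial content of the Bregman-projection protocol of Chen, Balcan and Chau, not a consequence of the ``standard smooth-boosting analysis'' you invoke.
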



The protocol we use to prove Theorem~\ref{theo:upp_acc_int} is a \emph{resilient} version of realizable-case boosting. It is resilient in the sense that it can be applied to any input sample, including samples that are \emph{not} realizable by the class $\mathcal{H}$. 
Moreover, as long as the input sample is sufficiently close to being realizable, this variant of boosting enjoys similar guarantees as in the fully realizable case.
This feature of our protocol is not standard in boosting algorithms in the realizable case, which are typically vulnerable to noise \citep*{dietterich2000experimental, long2010random}.

Our protocol can be implemented in the no-center model, in which the players can communicate directly (see \citep*{balcan2012distributed} for a more thorough discussion of these two models), by having one of the players play the part of the center. It also admits a randomized computationally efficient implementation, assuming an oracle access to a PAC learning algorithm for~$\mathcal{H}$ in the centralized setting
(see Section~\ref{sec:ub-exact} for further discussion). On the other hand, the protocol is improper. This is unavoidable: a result by~\citet*{kane2019communication} shows that even in the realizable case (i.e.\ $T(n)=0$), 
    some VC classes cannot be properly learned by communication-efficient protocols.



As mentioned in the introduction, the positive result of Theorem~\ref{theo:upp_acc_int} can also be proved by reduction to semi-agnostic learning. However, our direct approach results in a simpler protocol.

\smallskip

The following negative result shows that the assumption $\OPT \in \polylog n$ made by Theorem~\ref{theo:upp_acc_int} is necessary for allowing communication-efficient learning, even if the protocol is allowed to be randomized and improper.

\begin{theorem}[Negative Result] \label{theo:low_acc_opt_int}
Let $\mathcal{H}=\{h_n: n\in \mathbb{N}\}$, where $h_n(i)=1$ if and only if $i=n$, be the class of singletons over $\mathbb{N}$.
If $T(n) = \log^{\omega(1)} (n)$ then $\mathcal{H}$ is not learnable under the promise that $\OPT\leq T(n)$, even when there are only $k=2$ players.
\end{theorem}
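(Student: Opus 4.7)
The plan is to prove the lower bound by a counting/rectangle argument in the spirit of the realizable-case bound of Kane et al., but adapted so that the agnostic slack $\OPT \le T(n)$ is forced into useful structure on the output hypothesis.

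First I would construct a family of hard samples indexed by pairs of subsets. Fix $n$ and set $s := \lfloor T(n)/2 \rfloor$. For every pair $(A,B)$ with $A,B \in \binom{[n]}{s}$, let Alice's input be $\{(i,\epsilon_A(i)) : i \in [n]\}$ and Bob's input be $\{(i,\epsilon_B(i)) : i \in [n]\}$, where $\epsilon_A(i) = +1$ iff $i \in A$ and analogously for $\epsilon_B$. Each universe point then has exactly two examples. A direct computation gives $E_S(h_j) = (|A|+|B|) - p_j + q_j$, where $p_j,q_j$ are the positive/negative label counts at $j$, and so $\OPT_\mathcal{H}(S(A,B)) = |A\triangle B| \le 2s \le T(n)$, so the promise is satisfied. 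Crucially, the pointwise-best arbitrary function $f^\star$ (choosing $f^\star(i)$ to minimize errors at $i$) also achieves $|A \triangle B|$ errors, so $\OPT_\mathcal{H} = \min_f E_S(f)$.

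Next I would extract the key structural constraint on the protocol's output. Because $\OPT_\mathcal{H}$ coincides with the pointwise minimum, any output $f$ with $E_S(f) \le \OPT_\mathcal{H}$ must be pointwise optimal. At $i \in A\cap B$ the label counts are $(p,q)=(2,0)$, forcing $f(i)=+1$; at $i \notin A\cup B$ they are $(0,2)$, forcing $f(i)=-1$; on $A\triangle B$ both labels are optimal. Equivalently, the support $X := \{i : f(i)=+1\}$ of $f$ must satisfy $A \cap B \subseteq X \subseteq A \cup B$. Now apply a rectangle argument: any deterministic protocol with communication $C$ partitions the inputs into at most $2^C$ rectangles $U \times V$, each with a fixed output $X$. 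For every $(A,B) \in U \times V$ the above containment must hold, and the standard two-party argument shows that every $i \in [n]$ is frozen either across $U$ (its membership in $A$ is constant) or across $V$. Letting $F_U, F_V$ denote these frozen sets, we get $F_U \cup F_V = [n]$. Counting $s$-subsets consistent with the freezing gives $|U| \le \binom{n-|F_U|}{s-u_c}$ (and analogously for $|V|$), and maximizing under $|F_U|+|F_V| \ge n$ yields $|U|\cdot|V| \le \binom{n/2}{s}^2$.

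Combining, since the total number of input pairs is $\binom{n}{s}^2$, we get
\[
2^C \;\ge\; \frac{\binom{n}{s}^2}{\binom{n/2}{s}^2} \;=\; 4^{s(1-o(1))},
\]
so $C \ge 2s(1-o(1)) = \Omega(T(n))$. Since $m = 2n$ gives $\log m = O(\log n)$ and $d=1$, $k=2$, this contradicts the hypothetical upper bound $C \in \poly(\log n, \log m) = \poly(\log n)$ because $T(n) = \log^{\omega(1)} n$.

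The main obstacle is extending this deterministic argument to randomized protocols with error $\le 1/3$, as required by the definition of learnability. I would handle it via Yao's minimax principle applied to the uniform distribution on $(A,B) \in \binom{[n]}{s}^2$, reducing to a deterministic protocol with error $\le 1/3$; then a standard corruption/sub-rectangle extraction shows that any rectangle with a constant fraction of correctly-labelled pairs contains a large sub-rectangle that is fully correct, to which the $\binom{n/2}{s}^2$ bound applies. This preserves $C = \Omega(T(n))$ up to constant factors. The remaining steps are elementary combinatorics once the sample and the constraint on $X$ are in place.
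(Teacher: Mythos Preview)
There is a genuine gap in the second paragraph. From your own formula $E_S(h_j)=(|A|+|B|)-p_j+q_j=2s+2-2p_j$ one gets $\OPT_\mathcal{H}=2s+2-2\max_j p_j$, which equals $2s-2$ whenever $A\cap B\neq\emptyset$ (since then $\max_j p_j=2$). On the other hand $\min_f E_S(f)=|A\triangle B|=2s-2|A\cap B|$. Hence $\OPT_\mathcal{H}=\min_f E_S(f)$ only when $|A\cap B|\le 1$; for a generic pair in $\binom{[n]}{s}^2$ the slack is $2(|A\cap B|-1)$. This invalidates the key structural claim: an output $f$ with $E_S(f)\le \OPT_\mathcal{H}=2s-2$ need \emph{not} be pointwise optimal, and in particular the support $X$ need not satisfy $A\cap B\subseteq X\subseteq A\cup B$; it may violate this containment in up to $|A\cap B|-1$ coordinates. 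Your freezing argument, however, requires the containment to hold exactly for every pair in the rectangle. Restricting to disjoint $A,B$ destroys the product structure; placing $A$ and $B$ in disjoint halves of $[n]$ restores the product but makes the lower containment $A\cap B\subseteq X$ vacuous, so coordinates $i\notin X$ become unconstrained and the bound $|U|\cdot|V|\le\binom{n/2}{s}^2$ no longer follows.

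The paper's proof takes a different and much shorter route: a black-box reduction to set disjointness on $r=\lfloor T(n)/2\rfloor$ coordinates. Alice and Bob pad their $r$-bit inputs with zeros to length $n$, build exactly your sample, broadcast their Hamming weights $w(x),w(y)$, run the hypothetical learning protocol to obtain $f$, and declare ``disjoint'' iff $E_S(f)\ge w(x)+w(y)$. Correctness uses only that in the disjoint case \emph{every} classifier has error at least $w(x)+w(y)$, while in the intersecting case $\OPT_\mathcal{H}=w(x)+w(y)-2$. The $\Omega(r)=\Omega(T(n))$ randomized lower bound then comes for free from the Razborov/Kalyanasundaram--Schnitger bound on $\DISJ_r$, with no rectangle counting or Yao/corruption step required.
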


When there are two players, our model is equivalent to the standard two-party communication model~\citep*{yao1979some, kushilevitz_nisan_1996, rao_yehudayoff_2020}, in which two players, Alice and Bob, communicate through a direct channel, and this is the setting in which we prove Theorem~\ref{theo:low_acc_opt_int}.

\smallskip

Our results are in fact more general than stated. The algorithm used to prove Theorem~\ref{theo:upp_acc_int} outputs a hypothesis making at most $\OPT$ many mistakes using $\OPT \cdot \poly(d,k, \log m,\log n)$ communication (without having to know $\OPT$ in advance). The lower bound used to prove Theorem~\ref{theo:low_acc_opt_int} shows that for any value $T(n)$ and for any algorithm that learns the class of singletons there exists an input sample with $\OPT \approx T(n)$ on which the communication complexity of the protocol is $\Omega(T(n))$.




\subsection{Related Work}

Originally, distributed learning was studied from the point of view of parallel computation (a partial list includes~\citep*{bshouty1997exact, collins2002logistic, zinkevich2010parallelized, long2011algorithms}). The focus was on reducing the time complexity rather than the communication complexity. More recent work aims at minimizing communication~\citep*{balcan2012distributed, daume2012protocols, daume2012efficient, blum2021communication}. In~\citep*{balcan2012distributed}, privacy aspects of such learning tasks are discussed as well. 

A related natural model of distributed learning was proposed in~\citep*{balcan2012distributed}. In this model, there are $k$ entities and a center, and each entity $i$ can draw examples from a distribution~$D_i$. The goal is to learn a good hypothesis with respect to the mixture distribution $D=\frac{1}{k}\sum_{i=1}^k D_i$. The communication topology in this model is a star: all entities can communicate only with the center.

In this work, we consider a slightly different model, studied by~\citet*{daume2012efficient, daume2012protocols, kane2019communication, braverman2019convex}, which we call the \emph{adversarial} model.
In this model, a sample $S$ is given and partitioned freely among $k$ players by an adversary. While this model might seem less natural, it is more general than the model of \citet*{balcan2012distributed}, and our main contribution is a protocol that can be applied to this general model.


The work by \citet*{lazarevic2001distributed} suggested a framework for using boosting in distributed environments.
In~\citep*{chen2016communication},  a clever analysis of ``Smooth Boosting" \citep*{kale2007boosting} is used to give an efficient semi-agnostic boosting protocol. \citet*{kane2019communication} characterize which classes can be learned in the distributed and proper setting, and give some bounds for different distributed learning tasks. In~\citep*{braverman2019convex}, tight lower and upper bounds on the communication complexity of learning halfspaces are given, using geometric tools.




\section{Preliminaries} \label{sec:prel}

We use $\log$ for the base~$2$ logarithm. Let $U$ be the domain set and let $S\subset U \times \{\pm 1\}$ be a sample. 
We follow the standard definitions of \emph{empirical loss} and \emph{loss}:
\begin{align*}
    L_S(f) & := \frac{1}{\lvert S \rvert} \sum_{(x,y)\in S} 1\mleft[f(x) \neq y\mright], \\ 
    L_p(f) & :=\Pr_{(x,y)\sim p} \mleft[f(x) \neq y\mright],
\end{align*}
respectively, where $p$ is a probability distribution over $U\times \{\pm 1\}$.
We now briefly overview some relevant technical tools.

\paragraph{Boosting.}
The seminal work of \citet*{freund1997decision} used the \AdaBoost algorithm to boost a ``weak" learner into a ``strong" one. 
In this work we use a simplified version of \AdaBoost (see \citep*{schapire2013boosting}):
given a distribution $p$ over a sample $S$, an \emph{$\alpha$-weak hypothesis} with respect to $p$ is a hypothesis $h$ which is better than a random guess by an additive factor of $\alpha$:
\[
\Pr_{(x,y)\sim p}\mleft[h(x)\neq y\mright] \leq 1/2-\alpha.
\]
The boosting algorithm requires an oracle access to an \emph{$\alpha$-weak learner}, which is an algorithm that returns $\alpha$-weak hypotheses.
Given such a weak learner, the boosting algorithm operates as follows: it receives as input a sample $S$, and initializes the weight $W_1\mleft(z:=(x,y)\mright)$ of any example $z\in S$ to be $1$. In each iteration $t\in {1,\dots, T}$, it then uses the weak learner to obtain an $\alpha$-weak hypothesis~$h_t$ with respect to the distribution $p_t$ on $S$, 
which is defined by the weight function $W_t$, i.e.\ the probability of each example $z$ is proportional to $W_t(Z)$. 
The weights are then updated according to the performance of the weak hypothesis $h_t$ on each example:
\[
W_{t+1}(z)=W_{t}\cdot 2^{-1 [h(x) = y]}.
\]
After $T$ iterations, the algorithm returns the classifier
\[
f=\operatorname{sign} \mleft(\sum _{t=1}^T h_t\mright).
\]
We have the following upper bound\footnote{This formulation of the theorem appears explicitly as Lemma~2 in \citep*{kane2019communication}.} on the value of $T$ required for $f$ to satisfy $E_S(f)=0$.

\begin{theorem} [{\citet*{freund1997decision}}] 
\label{theo:boosting}
Let $T \geq 6\log \lvert S\rvert$, and assume that in any iteration $t$, a hypothesis $h_t$ which is $\mleft(\frac{1}{2}-\frac{1}{15}\mright)$-weak with respect to the current distribution $p_t$ is provided to the variant of \AdaBoost described above. Then for any $(x,y)\in S$ we have
\[
\frac{1}{T} \sum _{t=1}^T 1\mleft[h_t(x) \neq y\mright] \leq 1/3.
\]
\end{theorem}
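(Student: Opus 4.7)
The plan is to run the standard potential-function argument for AdaBoost, comparing an upper bound on the total weight (obtained from the weak learning guarantee) against a lower bound (obtained from any single example). Define $W_t := \sum_{z \in S} W_t(z)$, the total weight at the start of round $t$. The goal is to show both bounds and then extract the per-example fraction-of-errors statement.

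For the upper bound, let $c_t := \bigl(\sum_{z:\, h_t(x)=y} W_t(z)\bigr) / W_t$ denote the weighted fraction of examples correctly classified in round $t$. Unpacking the definition, ``$(\tfrac{1}{2}-\tfrac{1}{15})$-weak'' means $\Pr_{z \sim p_t}[h_t(x) \neq y] \leq \tfrac{1}{2} - (\tfrac{1}{2} - \tfrac{1}{15}) = \tfrac{1}{15}$, equivalently $c_t \geq \tfrac{14}{15}$. Since the update halves the weight of correctly classified examples and preserves the rest, $W_{t+1} = \tfrac{c_t}{2} W_t + (1-c_t) W_t = (1 - c_t/2)\, W_t \leq \tfrac{8}{15}\, W_t$. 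Iterating and using $W_1 = |S|$ yields $W_{T+1} \leq |S| \cdot (8/15)^T$.

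For the lower bound, fix $z = (x,y) \in S$ and set $m_z := \sum_{t=1}^T 1[h_t(x) = y]$, the number of rounds on which $h_t$ classifies $z$ correctly. The multiplicative update gives $W_{T+1}(z) = 2^{-m_z}$, and since weights are non-negative, $2^{-m_z} \leq W_{T+1} \leq |S| \cdot (8/15)^T$. Taking $\log_2$ and rearranging, $m_z \geq T \log_2(15/8) - \log_2 |S|$. Dividing by $T$ and using $T \geq 6 \log_2|S|$ gives $m_z/T \geq \log_2(15/8) - 1/6$.

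It then remains to check the numerical inequality $\log_2(15/8) - 1/6 \geq 2/3$, equivalently $(15/8)^6 \geq 2^5 = 32$, which holds since $(15/8)^6 = 15^6/8^6 \approx 43.4$. Hence $m_z/T \geq 2/3$, i.e.\ $\tfrac{1}{T} \sum_t 1[h_t(x) \neq y] = 1 - m_z/T \leq 1/3$, as claimed. I do not anticipate a substantive obstacle: the only mildly subtle point is correctly parsing ``$(\tfrac{1}{2}-\tfrac{1}{15})$-weak'' so that the advantage is $\alpha = \tfrac{13}{30}$ rather than $\alpha = \tfrac{1}{15}$; with this in hand, the per-round shrinkage factor $8/15$ is sharp enough to close the numerical gap against $T = 6 \log |S|$, and the remainder of the argument is a mechanical potential computation.
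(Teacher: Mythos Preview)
Your argument is correct: the potential-function analysis is the standard proof, and your parsing of ``$(\tfrac{1}{2}-\tfrac{1}{15})$-weak'' as $\alpha=\tfrac{1}{2}-\tfrac{1}{15}$ (hence error at most $\tfrac{1}{15}$) matches the paper's definition of $\alpha$-weakness, so the per-round shrinkage factor $8/15$ and the final numerical check $(15/8)^6\ge 32$ are both right. The paper itself does not prove this theorem; it quotes it as a known result (crediting \citet*{freund1997decision} and noting the specific formulation appears as Lemma~2 in \citep*{kane2019communication}), so there is no in-paper proof to compare against beyond observing that your argument is exactly the classical one.
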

An immediate corollary is that if $f$ is the classifier returned by \AdaBoost and $T \geq 6\log \lvert S\rvert$, then $E_S(f)=0$.

\paragraph{Small $\boldsymbol\epsilon$-approximations.}
Let $\mathcal{H}\subseteq \{\pm 1\}^{\mathcal{U}}$ be a concept class of VC-dimension $d< \infty$, let $p$ be a distribution over examples in $\mathcal{U}\times\{\pm 1\}$, and let $\epsilon>0$. The seminal uniform convergence theorem of \citet*{vapnik71uniform} implies that a random i.i.d sample $S$ of size $|S|=O(d/\epsilon^2)$  which is drawn from $p$ satisfies with a positive probability that
\[
(\forall h\in \mathcal{H}): \mleft|L_S(h)-L_p(h)\mright| \leq \epsilon.
\]
Crucially, note that $|S|$ depends only on $d,\epsilon$.
In particular, for every distribution $p$ there exists such a sample in its support.

\paragraph{Communication complexity.}
Our negative result applies already when there are only two players, in which case our model is equivalent to the standard two-party communication model \citep*{yao1979some,kushilevitz_nisan_1996}.
One of the standard problems in the two-party communication model is \emph{set disjointness}. In this problem, Alice gets a string $x \in \{0,1\}^n$, Bob gets a string $y \in \{0,1\}^n$, and the goal is to compute the following function $\DISJ_n(x,y)$:
\[
\DISJ_n(x,y)
=
\begin{cases}
0 & x_i=y_i=1 \text{ for some } $i$, \\
1 & \text{otherwise}.
\end{cases}
\]

The randomized communication complexity of $\DISJ_n$ is known to be large:

\begin{theorem}[\citet*{razborov1990distributional,kalyanasundaram1992probabilistic}] \label{theo:set_disj}
The randomized communication complexity of $\DISJ_n$ is $\Theta(n)$.
\end{theorem}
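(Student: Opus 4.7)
The plan: the upper bound is immediate --- Alice sends her $n$-bit string $x$ to Bob, who evaluates $\DISJ_n(x,y)$ and announces the answer with $n+1$ bits of deterministic communication, so the content of the theorem is the $\Omega(n)$ lower bound on randomized complexity. Writing $R_\epsilon(\DISJ_n)$ for the $\epsilon$-error randomized communication complexity, the goal is $R_{1/3}(\DISJ_n) = \Omega(n)$. I would prove this via the \emph{information complexity} route of Bar-Yossef--Jayram--Kumar--Sivakumar. The hard distribution is coordinate-wise: let $\nu$ be supported on $\{0,1\}^2$ together with an auxiliary ``partition'' bit $D \in \{A,B\}$ uniform, where conditioned on $D=A$ the pair $(X,Y)$ is uniform on $\{(0,0),(1,0)\}$ and conditioned on $D=B$ it is uniform on $\{(0,0),(0,1)\}$. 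Take $\mu=\nu^{\otimes n}$ with $\mathbf{D}=(D_1,\dots,D_n)$; inputs drawn from $\mu$ are always disjoint yet $\mu$ is information-theoretically hard. The conditional information cost of a protocol $\Pi$ is
\[
\mathrm{IC}_\mu(\Pi \mid \mathbf{D}) := I\bigl(X,Y\,;\,\Pi(X,Y) \mid \mathbf{D}\bigr).
\]

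The argument then proceeds in three steps. \emph{Direct sum:} independence of coordinates conditional on $\mathbf{D}$ and the chain rule give
\[
\mathrm{IC}_\mu(\Pi \mid \mathbf{D}) \;\ge\; \sum_{i=1}^{n} I\bigl(X_i,Y_i\,;\,\Pi \mid \mathbf{D}\bigr),
\]
and each summand is at least the single-coordinate conditional information cost of a protocol computing the two-bit $\mathrm{AND}$ with error $\le 1/3$ on $\nu$, via the embedding in which Alice and Bob use $\mathbf{D}_{-i}$ as shared randomness to simulate the other coordinates locally and then invoke $\Pi$ (a collision in one coordinate is exactly a $1$ in the AND). \emph{Single-coordinate lower bound:} any protocol computing $\mathrm{AND}(a,b)$ with error $\le 1/3$ has $I(A,B\,;\,\Pi \mid D) = \Omega(1)$ under $\nu$; this is where Hellinger distance and the cut-and-paste lemma enter to produce a constant separation between the transcript distribution on $(1,1)$ and those on the three support inputs $(0,0),(0,1),(1,0)$. \emph{Communication vs.\ information:} since a transcript of length $\ell$ has entropy at most $\ell$ regardless of conditioning, $R_{1/3}(\DISJ_n) \ge \mathrm{IC}_\mu(\Pi \mid \mathbf{D})$. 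Chaining the three inequalities yields $R_{1/3}(\DISJ_n) = \Omega(n)$.

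The main obstacle is the single-coordinate step. Its heart is the \emph{cut-and-paste lemma}: for any two-party protocol $\Pi$ with private randomness and any inputs $(a,b),(a',b')$, the four transcript distributions satisfy the rectangle identity $\Pr[\Pi(a,b)=\tau]\,\Pr[\Pi(a',b')=\tau] = \Pr[\Pi(a,b')=\tau]\,\Pr[\Pi(a',b)=\tau]$, which is the hallmark of two-party (as opposed to multi-party or broadcast) communication. From this one deduces that the squared Hellinger distances $h^2\bigl(\Pi(0,0),\Pi(0,1)\bigr)$ and $h^2\bigl(\Pi(0,0),\Pi(1,0)\bigr)$ control $h^2\bigl(\Pi(0,0),\Pi(1,1)\bigr)$ up to a constant; but a correct protocol must distinguish the latter pair with probability $\ge 2/3$, forcing at least one of the former to be $\Omega(1)$, which in turn forces $I(A,B\,;\,\Pi\mid D) = \Omega(1)$ via the standard conversion from Hellinger distance to mutual information over the rectangle support.
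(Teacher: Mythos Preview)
The paper does not prove this theorem at all: it is stated in the preliminaries with attribution to \citet*{razborov1990distributional,kalyanasundaram1992probabilistic} and used as a black box in Section~\ref{sec:lb-exact}. So there is no ``paper's own proof'' to compare against.

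Your sketch is nonetheless a correct proof of the $\Omega(n)$ lower bound, but note that it is \emph{not} the argument of either cited paper. You are outlining the information-complexity proof of Bar-Yossef, Jayram, Kumar, and Sivakumar (2004), which postdates both references. Razborov's original proof goes via distributional complexity and a corruption/rectangle bound under a carefully chosen product-like distribution (each coordinate is $(0,0)$, $(0,1)$, or $(1,0)$ with equal probability, plus a planted intersecting coordinate), arguing directly that any large monochromatic-$1$ rectangle must contain a constant fraction of intersecting inputs. Kalyanasundaram--Schnitger give a different combinatorial argument. The information-complexity route you describe is cleaner and modular (the direct-sum step isolates a single-bit $\mathrm{AND}$ problem, and cut-and-paste handles that), and it generalizes better to multiparty and streaming settings; the original proofs are more elementary in that they avoid information-theoretic machinery. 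Either way, for the purposes of this paper the theorem is simply quoted, so any of these proofs would suffice as background.
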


\section{A resilient boosting protocol} \label{sec:ub-exact}

In this section we use our boosting variant to prove Theorem~\ref{theo:upp_acc_int}, which follows from the next theorem.  

\begin{theorem} \label{theo:upp_acc}
{Let $\mathcal{H}$ be an hypothesis class with VC dimension $d<\infty$, let $k$ be the number of players, and let $T(n)\in \polylog(n)$.} The protocol \AccuratelyClassify, described in Figure~\ref{fig:prot_upp_imp_main}, is a learning protocol under the promise $\OPT \leq T(n)$, and has communication complexity of
\[O\bigl(\OPT \cdot k\log \lvert S \rvert(d \log n+ \log \lvert S \rvert)\bigr),\]  
where $S$ is its input sample. Furthermore, if $S$ contains no contradicting examples (that is, examples $(x,+1),(x,-1)$) then the classifier $f$ which the protocol outputs is consistent (i.e.\ satisfies $E_S(f)=0$).
\end{theorem}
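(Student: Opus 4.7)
The plan is to view \AccuratelyClassify as an outer loop over a distributed variant of \AdaBoost (call it \BoostAttempt), augmented with a hard-core extraction step inspired by Impagliazzo's lemma that isolates examples on which no hypothesis in $\mathcal{H}$ obtains a nontrivial advantage. A single call to \BoostAttempt runs $T = 6\log|S|$ rounds of the simplified \AdaBoost from Theorem~\ref{theo:boosting}: every player maintains the current weight vector on her subsample locally (by replaying the same sequence of weak hypotheses and applying the deterministic update rule), sends the center an $\epsilon$-approximation of her weighted subsample of size $O(d)$ via the Vapnik--Chervonenkis uniform convergence theorem, the center unions these into a global $\epsilon$-approximation of $p_t$, runs a centralized learner to extract a $(\tfrac{1}{2}-\tfrac{1}{15})$-weak hypothesis $h_t$, and broadcasts enough information (e.g.\ the labeled approximation sample, of total size $O(k(d\log n + \log|S|))$) to let every player recompute $h_t$. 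If every round succeeds, Theorem~\ref{theo:boosting} yields $E_S(f)=0$ on the current working sample with $f=\operatorname{sign}\bigl(\sum_t h_t\bigr)$, at a cost of $O(k\log|S|(d\log n + \log|S|))$ bits per call.

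The hard-core step fires whenever some round of \BoostAttempt fails, i.e.\ no hypothesis in $\mathcal{H}$ has weighted error at most $\tfrac{1}{2}-\tfrac{1}{15}$ under $p_t$. Applying this failure condition to the optimal $h^*\in\mathcal{H}$ and letting $E=\{(x,y)\in S : h^*(x)\neq y\}$, we get
\[
\tfrac{1}{2}-\tfrac{1}{15} \;\le\; \Pr_{(x,y)\sim p_t}\bigl[h^*(x)\neq y\bigr] \;=\; \frac{\sum_{(x,y)\in E} W_t(x,y)}{\sum_{(x,y)\in S} W_t(x,y)},
\]
with $|E|\le\OPT$. By pigeonhole some example in $E$ carries at least a $(\tfrac{1}{2}-\tfrac{1}{15})/\OPT$ fraction of the total weight, a non-negligible fraction under the promise $\OPT\le T(n)=\polylog(n)$. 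A distributed heavy-hitter subroutine (each player broadcasts her heaviest example and its weight, then the center picks the global maximum) therefore identifies in $O(k(\log n + \log|S|))$ bits an example $(x^\star,y^\star)$ lying in this hard core and hence misclassified by $h^*$. The protocol broadcasts $(x^\star,y^\star)$, records the majority of the labels under which it appears across players (a short tally), removes it from the working sample, and restarts \BoostAttempt.

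To conclude I would bound the number of restarts and assemble the overall cost. Since every broadcast example is misclassified by $h^*$ and $h^*$ errs on at most $\OPT$ points, there are at most $\OPT$ failed calls to \BoostAttempt and one successful call, giving total communication $O(\OPT\cdot k\log|S|(d\log n + \log|S|))$ bits. The output classifier is $\operatorname{sign}\bigl(\sum_t h_t\bigr)$ on non-broadcast examples and the recorded labels on the broadcast ones; errors occur only at broadcast examples that carry contradicting labels, and the majority-label rule matches $h^*$ on each such example, bounding $E_S(f)$ by the total number of $h^*$-errors on the broadcast set, which is at most $\OPT$. If $S$ has no contradicting examples, the recorded labels are uniquely determined and $f$ is consistent with $S$, as claimed. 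The main obstacle I anticipate is making the hard-core identification robust in two senses: picking the globally heaviest example is valid only if no correctly classified example happens to dominate every misclassified one in weight, so the proof must either show that a single heavy example always suffices or loosen the step to broadcast a small batch of the heaviest examples and argue that the total size of such batches across all restarts is still $O(\OPT)$; and weak-learning success/failure is tested on the $\epsilon$-approximation rather than on $p_t$ itself, so $\epsilon$ must be tuned so that an apparent success is a true success (preserving Theorem~\ref{theo:boosting}) while an apparent failure is genuine (preventing spurious restarts).
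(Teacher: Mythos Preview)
Your outer-loop architecture---run distributed boosting, and when a round fails to produce a weak hypothesis, excise a small ``hard'' piece of the sample and restart---is exactly the paper's. The divergence, and the gap, is in how the hard piece is chosen.

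Your heavy-hitter step does not work as stated. From the failure condition you correctly deduce that the error set $E$ of $h^*$ carries $p_t$-mass at least $\tfrac12-\tfrac{1}{15}$, hence some $E$-example has mass at least $(\tfrac12-\tfrac{1}{15})/\OPT$. But nothing forces the \emph{globally} heaviest example to lie in $E$: the boosting weights depend on the weak hypotheses $h_1,\dots,h_{t-1}$, not on $h^*$, and a correctly-classified example that happened to be misclassified by many $h_s$'s can easily outweigh every $E$-example. Your fallback of broadcasting a batch of heavy examples does not close the gap either: to guarantee the batch meets $E$ you must include every example of mass at least $c/\OPT$, and there can be $\Theta(\OPT)$ such examples per restart, none certifiably in $E$; this forces total removals of order $\OPT^2$ rather than the $O(\OPT)$ you assert, requires knowing $\OPT$ (or $T(n)$) to set the threshold, and muddies the final $E_S(f)\le\OPT$ accounting because most removed examples are not $h^*$-errors.

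The paper avoids all of this with a simpler observation. It tests for failure not on $p_t$ but on the distribution $D_t$ supported on the union $S'=\langle S'_i\rangle$ of the $\epsilon$-approximations the players have \emph{already sent} to the center that round. Failure means $L_{D_t}(h)>1/100$ for every $h\in\mathcal{H}$; in particular every $h$ misclassifies some point of $S'$, so $S'$ is not realizable. Removing any non-realizable subsample from $S$ strictly decreases $E_S(h)$ for \emph{every} $h$ simultaneously (each $h$ loses at least one error), hence $\OPT$ drops by at least one. So the protocol simply deletes $S'$---which has size $O(kd)$ and is already sitting at the center, costing no extra communication---and restarts. At most $\OPT$ restarts occur, and the removed multiset $\mathcal{D}$ has size $O(\OPT\cdot kd)$. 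The point is that one never tries to pin down which examples $h^*$ errs on; non-realizability of $S'$ is the right invariant, and it is certified for free by the failed weak-learning call.
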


\begin{figure}
    \centering
    \begin{tcolorbox}
    \begin{center}
        \BoostAttempt: Boosting that may get ``stuck"
    \end{center}
    \textbf{Setting:} There are $k$ players and a center, and $\mathcal{H}$ is a known hypothesis class over a domain~$U$. \\
    \textbf{Input:} A distributed sample $S:=\langle S_i \rangle_{i=1}^k$, where $S_i= (x^i_1,y^i_1), \dots, (x^i_{\lvert S_i \rvert},y^i_{\lvert S_i \rvert})$ for $i\in [k]$.\\
    \textbf{Output:} Either all players agree on a classifier $f\colon U \rightarrow \{\pm 1\}$ which makes no errors on $S$,
    or each player $i$ holds a sample $S'_i\subseteq S_i$ such that the concatenated sample $S'=\langle S'_i\rangle_{i=1}^k$ is not realizable.
    The center holds $S'$.
    \begin{enumerate}
        \item \textbf{Initialize:} Each player $i$ initializes $W_1(z^i_j)=1$ for all $1\leq j \leq \lvert S_i \rvert$.
        \item For $t:=1,\dots, T=\lceil 6\log \lvert S\rvert \rceil$:
        \begin{enumerate}
            \item For all $i\in [k]$, let $p_t^i$ be the distribution over $S_i$ defined by 
            $p_t^i(z^i_j)=\frac{W_t(z^i_j)}{W_t^{(i)}}$,
            where $W_t^{(i)}= \sum_{1\leq j\leq \lvert S_i \rvert} W_t(z^i_j)$. \\
            Each player~$i$ sends to the center a $\frac{1}{100}$-approximation w.r.t.\ $p_t^i$ of minimal size, 
            denoted by $S'_i= \hat{z}^i_1, \dots, \hat{z}^i_{\lvert S'_i \rvert}$.
            \item Each player $i$ sends $W_t^{(i)}$ to the center.
            \item Let $S' = \langle S'_i \rangle_{i=1}^k$. Let $D_t$ be the distribution on $S'$ defined by
            $D_t(\hat{z}^i_j) = \frac{1}{\lvert S'_i \rvert} \cdot \frac{W_t^{(i)}}{W_t}$,
            where $W_t= \sum_{i=1}^k W_t^{(i)}$ is the total sum of weights.
            \item If there is $\hat{h}\in \mathcal{H}$ such that $L_{D_t}(\hat{h}) \leq 1/100$ then:
            \begin{itemize}
                \item The center sets $h_t:=\hat{h}$ and sends $h_t$ to all players.
            \end{itemize}
            \item Else:
            \begin{itemize}
                \item Output $S'$.
            \end{itemize}
            \item Each player $i$ updates
            \[
            W_{t+1}(z^i_j) = W_t(z^i_j)\cdot 2^{-1[h_t(x^i_j)=y^i_j]}
            \]
            for any $z^i_j\in S_i$.
        \end{enumerate}
        \item Output the classifier
        \[
        f(x)=\operatorname{sign}\mleft(\sum_{t=1}^Th_t(x)\mright).
        \]
    \end{enumerate}
    \end{tcolorbox}
    \caption{A boosting protocol that may get ``stuck" when the input sample is not realizable.}
    \label{fig:prot_upp_imp_boost}
\end{figure}

\begin{figure}
    \centering
    \begin{tcolorbox}
    \begin{center}
        \AccuratelyClassify: A learning protocol
    \end{center}
    \textbf{Setting:} There are $k$ players and a center, and $\mathcal{H}$ is a known hypothesis class over a domain~$U$.\\
    \textbf{Input:} A distributed sample $S:=\langle S_i \rangle_{i=1}^k$. (Below we treat each $S_i$ as a multiset.)\\
    \textbf{Output:} A classifier $f\colon U \rightarrow \{\pm 1\}$.
    \begin{enumerate}
        \item \textbf{Initialize:} The center initializes a multiset $\mathcal{D}:=\emptyset$.
        \item While $\operatorname{\BoostAttempt}(\langle S_i\rangle _{i=1}^k)$ returns a non-realizable subsample $S':= \langle S'_i \rangle_{i=1}^k$:
        \begin{enumerate}
            \item The center updates $\mathcal{D} := \mathcal{D} \cup S'$.
            \item Each player updates $S_i:=S_i\backslash S'_i$.
        \end{enumerate}
        \item Let $g$ be the classifier returned by \BoostAttempt. 
        \item For every $x \in U$, let $n_+(x)$ be the number of times that the example $(x,+1)$ occurs in $\mathcal{D}$, and define $n_-(x)$ similarly.
        \item Output the classifier $f\colon U\rightarrow \{\pm 1\}$ defined for any $x\in U$ as follows:
        \[
        f(x)=
        \begin{cases}
            +1 & \text{} n_+(x) \geq 1 \text{, } n_+(x) \geq n_-(x), \\
            -1 & \text{} n_-(x) \geq 1 \text{, } n_-(x) > n_+(x), \\
            g(x) & \text{otherwise}.
        \end{cases}
        \]
    \end{enumerate}
    \end{tcolorbox}
    \caption{A resilient improper, deterministic learning protocol.}
    \label{fig:prot_upp_imp_main}
\end{figure}

\AccuratelyClassify relies on the \BoostAttempt protocol, appearing in Figure~\ref{fig:prot_upp_imp_boost}, which is similar to classical boosting.
 
To prove the theorem, we first argue that if \BoostAttempt does not get stuck (i.e.\ it reaches Item~3 in Figure~\ref{fig:prot_upp_imp_boost}), then it simulates boosting and enjoys the guarantees stated in Theorem~\ref{theo:boosting}. Then, we take into account what happens when \BoostAttempt does get stuck;
in this case we adopt the perspective inspired by Impagliazzo's Hardcore lemma to remove a small subsample of the input which is ``hard'' in the sense 
that every hypothesis in $\mathcal{H}$ has large error on it.
Finally, we analyze the total communication cost of the two protocols. 

\begin{lemma} \label{lem:upp_acc_zero_err}
 If protocol \BoostAttempt, described in Figure~\ref{fig:prot_upp_imp_boost}, outputs a classifier $f$, then $E_S(f)=0$.
\end{lemma}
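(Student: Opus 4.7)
The plan is to show that if \BoostAttempt reaches step~3 and outputs a classifier, then it has simulated the variant of \AdaBoost from Theorem~\ref{theo:boosting} on the full sample $S$. The guarantee $E_S(f)=0$ then follows as an immediate corollary. The only subtlety is that in each iteration the weak hypothesis $h_t$ is chosen based on its performance on the compressed distribution $D_t$ over $S'$ rather than on the true weighted distribution over $S$, so I need to translate the error guarantee from $D_t$ back to $S$ via the $\epsilon$-approximation property.

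Concretely, for each iteration~$t$ I would first define the "true" weighted distribution $P_t$ on the full sample $S$ by $P_t(z^i_j) = W_t(z^i_j)/W_t$. A direct computation shows that both $P_t$ and $D_t$ assign mass $W_t^{(i)}/W_t$ to player $i$'s portion, but conditioned on player~$i$'s portion, $P_t$ equals $p_t^i$ while $D_t$ is uniform on $S'_i$. Since $S'_i$ was chosen as a $\frac{1}{100}$-approximation of $p_t^i$ with respect to~$\mathcal{H}$, the error of every $h\in\mathcal{H}$ under $P_t$ restricted to player~$i$ differs from its error under $D_t$ restricted to player~$i$ by at most $\frac{1}{100}$. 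Averaging with weights $W_t^{(i)}/W_t$ preserves this bound, so
\[
\bigl|L_{P_t}(h) - L_{D_t}(h)\bigr| \le \tfrac{1}{100}
\quad \text{for every } h\in\mathcal{H}.
\]
If \BoostAttempt does not get stuck at iteration~$t$, then step 2(d) guarantees a hypothesis $h_t\in\mathcal{H}$ with $L_{D_t}(h_t)\le \frac{1}{100}$, and the inequality above upgrades this to $L_{P_t}(h_t)\le \frac{2}{100} < \frac{1}{2}-\frac{1}{15}$. Hence $h_t$ is a $\bigl(\tfrac{1}{2}-\tfrac{1}{15}\bigr)$-weak hypothesis with respect to the weighted distribution $P_t$ on~$S$.

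The multiplicative weight update in step 2(f) is exactly the update of the \AdaBoost variant described in the preliminaries applied to the full sample $S$ (the split across players is invisible to the update: each example's weight depends only on $h_t$'s prediction and the true label). Therefore, conditional on \BoostAttempt reaching step~3, the sequence $h_1,\dots,h_T$ and the final classifier $f=\operatorname{sign}\bigl(\sum_t h_t\bigr)$ are exactly what the centralized variant of \AdaBoost would produce on input~$S$ with a $\bigl(\tfrac{1}{2}-\tfrac{1}{15}\bigr)$-weak learner. Since $T = \lceil 6\log|S|\rceil$, Theorem~\ref{theo:boosting} gives $\frac{1}{T}\sum_{t=1}^T 1[h_t(x)\ne y] \le \frac{1}{3} < \frac{1}{2}$ for every $(x,y)\in S$, so a strict majority of the $h_t$'s classify each example correctly and hence $E_S(f)=0$.

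The only step that requires care is the passage from $L_{D_t}(h)$ to $L_{P_t}(h)$; everything else is bookkeeping verifying that \BoostAttempt's weight update matches that of the centralized boosting variant. The $\epsilon$-approximation argument cleanly handles this and is the crux of the lemma.
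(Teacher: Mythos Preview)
Your proposal is correct and follows essentially the same route as the paper: decompose the global weighted distribution as a mixture of the per-player distributions, use the $\tfrac{1}{100}$-approximation property of each $S'_i$ to transfer the bound $L_{D_t}(h_t)\le\tfrac{1}{100}$ to $L_{P_t}(h_t)\le\tfrac{2}{100}$, and then invoke Theorem~\ref{theo:boosting}. One small slip: to conclude that $h_t$ is $(\tfrac{1}{2}-\tfrac{1}{15})$-weak you need $L_{P_t}(h_t)\le \tfrac{1}{15}$ (since $\alpha$-weak means error $\le \tfrac12-\alpha$), not $L_{P_t}(h_t)<\tfrac{1}{2}-\tfrac{1}{15}$; of course $\tfrac{2}{100}<\tfrac{1}{15}$ so the conclusion stands.
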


\begin{proof}
We show that if \BoostAttempt does not stop at step 2(e) of some iteration, 
then in every iteration $t$, the provided hypothesis $h_t$ is a $\mleft(\frac{1}{2}-\frac{1}{50}\mright)$-weak hypothesis 
with respect to the current distribution $p_t$ in the boosting process:
recall from the preliminaries that $p_t$ is a distribution on $S$, 
which is defined by the weight function~$W_t$, i.e.\ the probability of each example $z$ is proportional to~$W_t(Z)$. 
To establish the above we use two crucial properties of $h_t$:
\begin{itemize}
    \item The hypothesis $h_t$ satisfies
    \[
    L_{D_t}(h_t)\leq 1/100,
    \]
    where $D_t$ is the distribution defined in step 2(c), i.e.\ it is the mixture of the uniform distributions over the $S'_i$'s weighted by $\frac{W_t^{(i)}}{W_t}$. 
    \item $S'_i$ is a $\frac{1}{100}$-approximation of the distribution $p_t^i$ on $S_i$, defined by $p_t^i(z^i_j)=\frac{W_t(z^i_j)}{W_t^{(i)}}$ , and hence
    \[
    \mleft|L_{S'_i}(h) - L_{p_t^i}(h)\mright|
    \leq
    1/100
    \]
    for all $h\in \mathcal{H}$.
\end{itemize}

Let $p_t$ be the normalization of the weights in iteration $t$, that is $p_t(z^i_j)= \frac{W_t(z^i_j)}{W_t}$. So:

\begin{align*}
L_{p_t}(h_t) 
&=
\sum _{i=1}^k \sum_{z^i_j\in S_i}p_t(z^i_j)1[h_t(x^i_j)\neq y^i_j] \\
&=
\sum _{i=1}^k \sum_{z^i_j\in S_i}\frac{W_t(z^i_j)}{W_t}1[h_t(x^i_j)\neq y^i_j] \\
&=
\sum _{i=1}^k \frac{W_t^{(i)}}{W_t} \sum_{z^i_j\in S_i} \frac{W_t(z^i_j)}{W_t^{(i)}} 1[h_t(x^i_j)\neq y^i_j] \\
&=
\sum _{i=1}^k \frac{W_t^{(i)}}{W_t} L_{p_t^i}(h_t) \\
& \leq
\sum _{i=1}^k \frac{W_t^{(i)}}{W_t} \mleft[L_{S'_i}(h_t) +1/100\mright] \\
& =
\sum _{i=1}^k \frac{W_t^{(i)}}{W_t} \mleft[\frac{\sum_{\hat{z}^i_j\in S'_i}1[h_t(\hat{x}^i_j)\neq \hat{y}^i_j]}{|S'_i|} +1/100\mright] \\
& =
L_{D_t}(h_t) + 1/100 \\
& \leq
1/100+1/100 = 1/50.
\end{align*}

Since $1/50 < 1/15$, by Theorem~\ref{theo:boosting} a total of $\lceil 6\log \lvert S\rvert \rceil$ iterations are enough to output a classifier~$f$ that satisfies $E_S(f)=0$.
\end{proof}

Next, we we consider the case in which \BoostAttempt does get stuck. In this case, note that the small sample $S'$ sent to the center is not realizable.

\begin{observation}\label{obs:upp_acc_stuck}
Let $D$ be a distribution over a sample $S$. If for all $h\in \mathcal{H}$ it holds that $L_{D}(h)> 1/100$ then $S$ is not realizable. 
\end{observation}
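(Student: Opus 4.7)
The plan is to prove the contrapositive: I will show that if $S$ is realizable by $\mathcal{H}$, then some hypothesis $h \in \mathcal{H}$ achieves $L_D(h) \le 1/100$ (indeed, $L_D(h) = 0$).

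Concretely, suppose $S$ is realizable by $\mathcal{H}$. By definition, this means that there exists some $h^* \in \mathcal{H}$ such that $h^*(x) = y$ for every example $(x,y) \in S$. Then for any distribution $D$ supported on $S$,
\[
L_D(h^*) \;=\; \Pr_{(x,y) \sim D}\bigl[h^*(x) \neq y\bigr] \;=\; 0,
\]
since the event $\{h^*(x)\neq y\}$ has no support under $D$. In particular, $L_D(h^*) \le 1/100$, contradicting the hypothesis that $L_D(h) > 1/100$ for every $h \in \mathcal{H}$. Hence $S$ cannot be realizable.

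There is essentially no obstacle here: the statement follows immediately from unpacking the definition of ``realizable'' (the existence of a zero-error hypothesis in $\mathcal{H}$) and noting that a zero-error hypothesis has zero loss under every distribution, regardless of what $D$ looks like. The observation is stated separately only because it is the conceptual bridge used in \AccuratelyClassify: whenever \BoostAttempt fails to find an $h_t$ with $L_{D_t}(h_t) \le 1/100$, the extracted subsample $S'$ is guaranteed to contain ``hard'' (i.e., non-realizable) structure, which is what justifies removing it and repeating.
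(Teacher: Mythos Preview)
Your proof is correct and matches the paper's intent: the paper states this as an observation without proof, since it follows immediately by contrapositive from the definition of realizability (a consistent $h^*\in\mathcal{H}$ has $L_D(h^*)=0$ for any distribution $D$ supported on $S$). Your added remark about the role of the observation in \AccuratelyClassify is also accurate.
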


The following observation states that \BoostAttempt is called at most $\OPT$ times by \AccuratelyClassify.

\begin{observation} \label{obs:upp_acc}
Let $S$ be a non-realizable sample, and let $S'$ be a non-realizable subsample of $S$. Then for all $h\in \mathcal{H}$,
\[
E_S(h)>E_{S\backslash S'}(h).
\]
That is, if we remove any non-realizable subsample from $S$, then the \emph{number} of mistakes of any hypothesis decreases by at least $1$.
\end{observation}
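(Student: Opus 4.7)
The plan is to exploit the fact that the mistake-count $E_S(h)$ is additive over disjoint subsamples, together with the defining property of non-realizability: a sample is non-realizable with respect to $\mathcal{H}$ precisely when every hypothesis in $\mathcal{H}$ makes at least one mistake on it.

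First I would write
\[
E_S(h) \;=\; E_{S\setminus S'}(h) \;+\; E_{S'}(h),
\]
which is immediate from the definition $E_S(h)=\sum_{(x,y)\in S}1[h(x)\neq y]$ applied to the disjoint multiset decomposition $S = (S\setminus S') \sqcup S'$. Next, because $S'$ is non-realizable by $\mathcal{H}$, for every $h\in\mathcal{H}$ there exists at least one example $(x,y)\in S'$ with $h(x)\neq y$; otherwise $h$ would realize $S'$, contradicting non-realizability. Hence $E_{S'}(h)\geq 1$ for every $h\in \mathcal{H}$.

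Combining the two gives
\[
E_S(h) \;=\; E_{S\setminus S'}(h) + E_{S'}(h) \;\geq\; E_{S\setminus S'}(h) + 1 \;>\; E_{S\setminus S'}(h),
\]
which is the desired strict inequality. This is entirely a one-line argument once the additivity and the ``every hypothesis errs at least once on a non-realizable sample'' observations are stated, so I do not anticipate any real obstacle; the only subtlety worth flagging is the multiset convention used throughout \AccuratelyClassify, which ensures that removing $S'$ as a multiset is well-defined and that the additive decomposition above is valid even when examples repeat. Note that the hypothesis that $S$ itself is non-realizable is not actually needed for the inequality, though it fits the context in which the observation is applied (each call to \BoostAttempt that gets stuck peels off a non-realizable $S'$, and the observation is then used to conclude that $\OPT$ strictly decreases, bounding the number of such peelings by $\OPT$).
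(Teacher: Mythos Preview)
Your proof is correct and matches the paper's approach: the paper states this as an \emph{observation} without proof, treating it as immediate from additivity of $E_S$ and the definition of non-realizability, which is exactly what you spell out. Your remark that the non-realizability of $S$ itself is not needed is also accurate.
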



We are now ready to prove Theorem~\ref{theo:upp_acc}. The main part is analysing the communication complexity of \BoostAttempt. 

\begin{proof}[Theorem~\ref{theo:upp_acc}]
First we show correctness, and then analyze the communication complexity.
\paragraph{Correctness.} The loop in \AccuratelyClassify is executed as long as \BoostAttempt returns a non-realizable sample. Due to Observation~\ref{obs:upp_acc}, after at most $\OPT$ iterations, \BoostAttempt will return a classifier, since the input sample will then be realizable. 
This classifier makes zero errors on the input to \BoostAttempt, due to Lemma~\ref{lem:upp_acc_zero_err}. 
Consequently, the classifier $f$ returned by \AccuratelyClassify makes the least number of errors among all possible classifiers. Furthermore, if $S$ contains no contradicting examples, then $E_S(f)=0$.

\paragraph{Communication.} We first analyze the communication complexity of \BoostAttempt
    and show that its upper bounded by $O(k\log |S|(d \log n+ \log \lvert S \rvert))$.
    First, it has $\lceil 6\log\lvert S\rvert\rceil = O(\log \lvert S\rvert)$ iterations. 
    In each iteration, $k$ many $\frac{1}{100}$-approximations are sent to the center in step 2(a), each taking $O(d\log n)$ bits to encode, according to \citep*{vapnik71uniform}. 
    Then, the sums of weights of each player are sent to the center in step 2(b). This requires $O(k \log \lvert S \rvert)$ communication: 
    indeed, the initial weight of each element is $1$, and in each iteration it might be halved. 
    There are $O(\log \lvert S \rvert)$ iterations, so the weight of any element may decrease up to $\Omega(1/\lvert S \rvert)$. 
    So, encoding the sums of weights in step 2(b) requires $O(k\log \lvert S \rvert)$ bits. 
    Steps 2(c-e) can now be executed by the center, with zero communication. 
    Now, if the condition in step 2(d) does not hold, a non-realizable sample $S'$, which is the concatenation of the $\frac{1}{100}$-approximations $S'_i$,
    is outputted by \BoostAttempt. This step requires $k$ bit of communication, in which the center indicates to each of the players 
    that this condition does not hold. Also notice that this step happens at most once and hence increases the total communication complexity by at most $k$ bits.
    If this condition holds and the protocol continues, then each player updates its weights with zero communication.
    Thus, we get a total of $O(k\log |S|(d \log n+ \log \lvert S \rvert))$ communication used in \BoostAttempt. 

\AccuratelyClassify executes \BoostAttempt at most $\OPT$ times due to Observation~\ref{obs:upp_acc}, and hence the total communication used by \AccuratelyClassify is $O(\OPT \cdot k\log \lvert S \rvert(d \log n+ \log \lvert S \rvert))$.
\end{proof}

\paragraph{A computationally efficient implementation.}
We defined \BoostAttempt as a communication-efficient deterministic protocol. However, as currently formulated, the protocol is not computationally efficient, since step 2(a) requires finding a $\frac{1}{100}$-approximation, which cannot be done efficiently in general. \citet*{vapnik71uniform} proved that a random sample of size $O(d/\epsilon^2)$ is an $\epsilon$-approximation with high probability. 
This can be used to make our protocol efficient at the cost of making it randomized. Furthermore, notice that in step 2(d), a weak hypothesis for the distribution $D_t$ on $S'$ is found by the center. 
This step can also be implemented efficiently provided that $\mathcal{H}$ admits an efficient agnostic PAC learner in the centralized setting.


\section{A complementing negative result} \label{sec:lb-exact}

In this section we prove Theorem~\ref{theo:low_acc_opt_int}.

\begin{theorem*}[Theorem~\ref{theo:low_acc_opt_int} restatement] 
Let $\mathcal{H}=\{h_n: n\in \mathbb{N}\}$, where $h_n(i)=1$ if and only if $i=n$, be the class of singletons over $\mathbb{N}$.
If $T(n) = \log^{\omega(1)} n$ then $\mathcal{H}$ is not learnable under the promise that $\OPT\leq T(n)$, even when there are only $k=2$ players.
\end{theorem*}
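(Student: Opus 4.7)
I plan to reduce from the two-party set-disjointness problem $\DISJ_N$, with parameter $N := T(n)$ (so that $N$ is super-polylogarithmic in $n$ by assumption); its randomized communication complexity is $\Omega(N)$ by Theorem~\ref{theo:set_disj}.

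Given an instance $(x, y) \in \{0, 1\}^N \times \{0, 1\}^N$ of DISJ drawn from the standard hard promise $|x| = |y| = N/4$ with either $x \cap y = \emptyset$ or $|x \cap y| = 1$, Alice and Bob will construct a distributed sample as follows. Fix $N$ distinct elements $u_1, \ldots, u_N \in \mathbb{N}$. Alice's sample will be $S_A := \{(u_i, x_i^*) : i \in [N]\}$, where $x_i^* := +1$ if $x_i = 1$ and $-1$ otherwise, and Bob's sample $S_B$ will be defined analogously using $y$. Thus each $u_i$ appears twice in the combined sample. A routine calculation yields
\[ \OPT(S) = |x| + |y| + 2 - 2 \max_i(x_i + y_i), \]
which evaluates to $N/2$ in the disjoint case and $N/2 - 2$ in the intersecting case; both are at most $T(n)$, so the promise $\OPT \leq T(n)$ is met.

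Next, I will analyze the structure imposed on any valid output $f$. Writing $V := \{i \in [N] : f(u_i) = +1\}$, the identity $E_S(f) = |x|+|y| + 2(|V \setminus (x\cup y)| - |V \cap x \cap y|)$ together with the bound $E_S(f) \leq \OPT(S)$ will force $V \subseteq x \cup y$ in both cases, and additionally $i^* \in V$ in the intersecting case where $\{i^*\} = x \cap y$. Since the players agree on the output $f$, they both know $V$, and so the disjointness of $(x, y)$ is encoded in $V$: the pair is intersecting if and only if $V \cap x \cap y \neq \emptyset$.

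The main obstacle will be controlling the ``decoding'' cost: a naive verification of $V \cap x \cap y \neq \emptyset$ requires running disjointness on $V$, costing $\Omega(|V|)$ extra bits, which could be as large as $\Theta(N)$ and make the reduction vacuous. Following (and slightly extending) the fooling-set argument of~\citet*{kane2019communication}, I will recast the analysis so that the explicit decoding step is bypassed: one exhibits a family of DISJ inputs that cannot all be served by the same transcript under the rectangle property of two-party deterministic protocols (with the randomized case handled by averaging over coins and appealing to the distributional lower bound for DISJ), yielding $C(\pi) \geq \Omega(N) = \Omega(T(n))$ directly. Since this exceeds any polylogarithmic bound in $n$, $\mathcal{H}$ is not learnable under the stated promise, proving the theorem.
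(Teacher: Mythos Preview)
Your reduction setup is correct and essentially identical to the paper's: same mapping of DISJ inputs to labeled samples, same computation of $\OPT$, same verification that the promise $\OPT \le T(n)$ holds. The structural identity $E_S(f) = |x|+|y| + 2(|V \setminus (x\cup y)| - |V \cap x \cap y|)$ is also correct.

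The gap is in your decoding step. You treat ``determine whether $V \cap x \cap y \neq \emptyset$'' as the obstacle and then gesture at an unspecified fooling-set argument to sidestep it. But you have already written down the tool that makes decoding trivial: your own identity for $E_S(f)$. In the disjoint case $|V \cap x \cap y|=0$, so $E_S(f) \ge |x|+|y| = N/2$ for \emph{every} classifier $f$; in the intersecting case, whenever the learner succeeds (probability $\ge 2/3$) you have $E_S(f) \le \OPT = N/2 - 2 < N/2$. Hence the players need only compute $E_S(f)$ and compare it to the threshold $N/2$. Since Alice can compute $E_{S_A}(f)$ locally and Bob can compute $E_{S_B}(f)$ locally, exchanging one $O(\log N)$-bit integer suffices. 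This gives a DISJ$_N$ protocol of cost $C(\pi) + O(\log N)$, and the $\Omega(N)$ lower bound on DISJ yields $C(\pi) = \Omega(T(n))$ immediately.

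This threshold test on $E_S(f)$ is exactly what the paper does (it publishes $w(x),w(y)$ and outputs $1$ iff $E_S(f) \ge w(x)+w(y)$; your promise $|x|=|y|=N/4$ makes even that step unnecessary). Your proposed fooling-set detour is both unspecified --- ``recast the analysis'' and ``handled by averaging over coins'' are not a proof --- and unnecessary. Replace the last paragraph of your proposal with the two-line threshold argument above and the proof is complete.
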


The proof uses a mapping suggested in \citep*{kane2019communication} together with Theorem~\ref{theo:set_disj}, the well-known communication lower bound for set disjointness.

\begin{lemma} [\citet*{kane2019communication}] \label{lem:low_acc_map}
Let $x,y\in \{0,1\}^n$, and let $w(x)$ denote the hamming weight of a binary string $x$. Let $\mathcal{H}$ be the class of singletons over $[n]$ (it contains exactly all hypotheses that assign $1$ to a single $i\in [n]$ and $-1$ to all other elements). Then, there are mappings $F_a,F_b\colon \{0,1\}^n\rightarrow \mleft([n] \times \{\pm 1\} \mright)^n$ taking boolean $n$-vectors to samples such that the combined sample $S:=\langle F_a(x);F_b(y) \rangle$ satisfies:
\begin{enumerate}
    \item If $\DISJ_n(x,y)=1$ then $E_S(f) \geq  w(x)+w(y)$ for any classifier $f$ (not necessarily from $\mathcal{H}$).
    \item If $\DISJ_n(x,y)=0$ then the optimal $h\in \mathcal{H}$ satisfies $E_S(h) = w(x)+w(y)-2$.
\end{enumerate}
\end{lemma}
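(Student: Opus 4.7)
My plan is to give an explicit construction of $F_a$ and $F_b$ and then verify both bullets by direct counting. The natural idea is to encode each bit of the input as a single labelled example at the corresponding index. I will define
$F_a(x) := \bigl((1, 2x_1-1), (2, 2x_2-1), \dots, (n, 2x_n-1)\bigr)$, and $F_b(y)$ analogously. Thus, for each index $i \in [n]$, the combined sample $S$ contains Alice's example $(i, +1)$ if $x_i = 1$ and $(i, -1)$ if $x_i = 0$, together with Bob's example given by $y_i$ in the same way. Intuitively, each index of agreement $x_i = y_i = 1$ (a witness to $\DISJ_n(x,y)=0$) creates two positive examples at the same location, which a singleton pinned at $i$ can then classify correctly ``for free.''

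For the first bullet, I would bound $E_S(f)$ for an arbitrary classifier $f \colon [n] \to \{\pm 1\}$ in an index-by-index fashion. At index $i$, choosing $f(i) = +1$ costs $(1-x_i) + (1-y_i) = 2 - x_i - y_i$ errors (one per example whose label is $-1$), while choosing $f(i) = -1$ costs $x_i + y_i$ errors, so the minimum achievable loss at $i$ is $\min(x_i + y_i,\ 2 - x_i - y_i)$. Under the assumption $\DISJ_n(x,y) = 1$ we have $x_i + y_i \leq 1 \leq 2 - x_i - y_i$ for every $i$, hence this minimum equals $x_i + y_i$. Summing over $i$ yields $E_S(f) \geq w(x) + w(y)$, independently of $f$.

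For the second bullet I would restrict to singleton hypotheses. A short calculation shows that for every $j \in [n]$,
$E_S(h_j) = (1 - x_j) + (w(x) - x_j) + (1 - y_j) + (w(y) - y_j) = 2 + w(x) + w(y) - 2(x_j + y_j)$,
since $h_j$ errs on each positively-labelled example at an index $i \neq j$ and on the example at index $j$ precisely when that example has label $-1$. This quantity is minimized by making $x_j + y_j$ as large as possible. When $\DISJ_n(x,y) = 0$ there exists an index $j$ with $x_j = y_j = 1$, so the optimum over singletons is attained at such a $j$ with value $2 + w(x) + w(y) - 4 = w(x) + w(y) - 2$, matching the claim.

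I do not anticipate any real obstacle; once the encoding is chosen, the verification is pure bookkeeping. The only design choice is the encoding itself, and the clue is exactly the ``$-2$'' in the second bullet: we want each coordinate of agreement in the $\DISJ = 0$ case to save two errors compared to the disjoint case, which is precisely what labelling $1$-indices as $+1$ and $0$-indices as $-1$ achieves when paired with singleton hypotheses.
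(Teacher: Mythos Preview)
Your proposal is correct and uses exactly the same construction as the paper: the paper defines $F_a(x)=\{(i,(-1)^{1-x_i}):i\in[n]\}$ and $F_b$ analogously, which is identical to your encoding since $(-1)^{1-x_i}=2x_i-1$. The paper only states the mappings and leaves the verification implicit; your index-by-index counting is precisely the intended bookkeeping.
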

The proof follows by letting
\begin{align*}
F_a(x)=\mleft\{(i,(-1)^{1-x_i}): i\in [n]\mright\}, \\
F_b(y)=\mleft\{(i,(-1)^{1-y_i}): i\in [n]\mright\}.
\end{align*}
Those mappings are used in \citep*{kane2019communication} to prove a reduction to set disjointness, in order to show that agnostic classification requires $\Omega(n)$ communication under some conditions. A slight modification of their proof results in the bound of Theorem~\ref{theo:low_acc_opt_int}. 

\begin{proof}[Proof of Theorem~\ref{theo:low_acc_opt_int}.]
Let $n\in\mathbb{N}$ and set $U=[n]$. 
Given a randomized improper learning protocol $\pi(U)$ for $\mathcal{H}$ under the promise that $\OPT\leq T(n)$, we construct the following protocol $\pi'$ for $\DISJ_r$,
where $r=\lfloor \frac{T(n)}{2}\rfloor$.
\begin{enumerate}
    \item Let $x,y\in\{0,1\}^r$ denote the inputs for $\DISJ_r$.
    \item Publish $w(x),w(y)$.
    \item Extend $x,y$ to strings $x',y' \in \{0,1\}^n$ by adding $n-r$ zeroes to each.
    \item Construct $S:=\langle F_a(x');F_b(y') \rangle$ as described in  Lemma~\ref{lem:low_acc_map}.
    \item Execute $\pi(S)$ and let $f$ be the hypothesis it outputs.
    \item Output $1$ if and only if $E_S(f) \geq w(x)+w(y)$.
\end{enumerate}

Note that by construction, $\OPT$ is at most $2r \leq T(n)$ (because any singleton $h_i$ where $i\leq r$ has error at most $2r$ on $S$). So, $\OPT\leq T(n)$ and therefore Lemma~\ref{lem:low_acc_map} implies that this protocol solves set disjointness correctly with probability at least $2/3$. Thus, by Theorem~\ref{theo:set_disj}, its communication complexity is $\Omega(r) = \Omega(T(n))$.

We now wrap up the proof by showing that the communication complexity of $\pi$ is not in \[\poly(\log n,\log \lvert S\rvert=\log n, k=2) = \polylog (n).\]
Indeed, the communication complexity of $\pi'$ is at most $2\log r$ larger than that of $\pi$. Thus, also the communication complexity of $\pi$ is $\Omega(r)= \Omega(T(n))$,
and by assumption $T(n) = \log^{\omega(1)}n$.
%
%
\end{proof}

\section{Suggestions for future research} \label{sec:open}
\paragraph{Characterizing agnostic learning.}
Our main result can be viewed as an agnostic learning protocol whose communication complexity depends linearly on $\OPT$. There are concept classes in which such dependence is necessary, as shown by Theorem~\ref{theo:low_acc_opt_int}. It is also easy to see that there are classes for which this dependence can be avoided, for example finite classes. 
Is there a natural characterization of those classes which are learnable without any promise on $\OPT$? Are there infinite classes with this property?

\paragraph{The approximation factor in semi-agnostic learning.}
\citet*{balcan2012distributed} and \citet*{chen2016communication} give efficient semi-agnostic learners that approximate the error of a best hypothesis from the class up to a multiplicative factor of $c\geq 4$. A simple alteration of the constants in their proofs improves the approximation factor to $2+\alpha$ for every $\alpha > 0$ (at the cost of higher communication complexity which deteriorates as $\alpha\to 0$). Can the multiplicative factor be further improved, say to $c$ for some $c \leq 2$? 

\paragraph{Bounded communication complexity and generalization.}
It is interesting to further explore the relationship between the communication complexity and the generalization capacity of distributed learning protocols.

\section*{Acknowledgments}
We thank an anonymous ALT 2022 reviewer for pointing out that Theorem~\ref{theo:upp_acc_int} can be proved by reduction to semi-agnostic learning.


\bibliography{bib}

\begin{thebibliography}{26}
\providecommand{\natexlab}[1]{#1}
\providecommand{\url}[1]{\texttt{#1}}
\expandafter\ifx\csname urlstyle\endcsname\relax
  \providecommand{\doi}[1]{doi: #1}\else
  \providecommand{\doi}{doi: \begingroup \urlstyle{rm}\Url}\fi

\bibitem[Balcan et~al.(2012)Balcan, Blum, Fine, and
  Mansour]{balcan2012distributed}
Balcan, M.~F., Blum, A., Fine, S., and Mansour, Y.
\newblock Distributed learning, communication complexity and privacy.
\newblock In \emph{Conference on Learning Theory}, pp.\  26.1--26.22. JMLR
  Workshop and Conference Proceedings, 2012.

\bibitem[Blum et~al.(2021)Blum, Heinecke, and Reyzin]{blum2021communication}
Blum, A., Heinecke, S., and Reyzin, L.
\newblock Communication-aware collaborative learning.
\newblock In \emph{Proceedings of the AAAI Conference on Artificial
  Intelligence}, volume~35, 2021.

\bibitem[Braverman et~al.(2019)Braverman, Kol, Moran, and
  Saxena]{braverman2019convex}
Braverman, M., Kol, G., Moran, S., and Saxena, R.~R.
\newblock Convex set disjointness, distributed learning of halfspaces, and lp
  feasibility.
\newblock \emph{arXiv preprint arXiv:1909.03547}, 2019.

\bibitem[Bshouty(1997)]{bshouty1997exact}
Bshouty, N.~H.
\newblock Exact learning of formulas in parallel.
\newblock \emph{Machine Learning}, 26\penalty0 (1):\penalty0 25--41, 1997.

\bibitem[Bun et~al.(2019)Bun, Kamath, Steinke, and Wu]{bun2019private}
Bun, M., Kamath, G., Steinke, T., and Wu, S.~Z.
\newblock Private hypothesis selection.
\newblock \emph{Advances in Neural Information Processing Systems},
  32:\penalty0 156--167, 2019.

\bibitem[Chen et~al.(2016)Chen, Balcan, and Chau]{chen2016communication}
Chen, S.-T., Balcan, M.-F., and Chau, D.~H.
\newblock Communication efficient distributed agnostic boosting.
\newblock In \emph{Artificial Intelligence and Statistics}, pp.\  1299--1307.
  PMLR, 2016.

\bibitem[Collins et~al.(2002)Collins, Schapire, and
  Singer]{collins2002logistic}
Collins, M., Schapire, R.~E., and Singer, Y.
\newblock Logistic regression, adaboost and bregman distances.
\newblock \emph{Machine Learning}, 48\penalty0 (1):\penalty0 253--285, 2002.

\bibitem[Daum{\'e} et~al.(2012{\natexlab{a}})Daum{\'e}, Phillips, Saha, and
  Venkatasubramanian]{daume2012protocols}
Daum{\'e}, III, H., Phillips, J., Saha, A., and Venkatasubramanian, S.
\newblock Protocols for learning classifiers on distributed data.
\newblock In \emph{Artificial Intelligence and Statistics}, pp.\  282--290.
  PMLR, 2012{\natexlab{a}}.

\bibitem[Daum{\'e} et~al.(2012{\natexlab{b}})Daum{\'e}, Phillips, Saha, and
  Venkatasubramanian]{daume2012efficient}
Daum{\'e}, III, H., Phillips, J.~M., Saha, A., and Venkatasubramanian, S.
\newblock Efficient protocols for distributed classification and optimization.
\newblock In \emph{International Conference on Algorithmic Learning Theory},
  pp.\  154--168. Springer, 2012{\natexlab{b}}.

\bibitem[Dietterich(2000)]{dietterich2000experimental}
Dietterich, T.~G.
\newblock An experimental comparison of three methods for constructing
  ensembles of decision trees: Bagging, boosting, and randomization.
\newblock \emph{Machine learning}, 40\penalty0 (2):\penalty0 139--157, 2000.

\bibitem[Freund \& Schapire(1997)Freund and Schapire]{freund1997decision}
Freund, Y. and Schapire, R.~E.
\newblock A decision-theoretic generalization of on-line learning and an
  application to boosting.
\newblock \emph{Journal of computer and system sciences}, 55\penalty0
  (1):\penalty0 119--139, 1997.

\bibitem[Impagliazzo(1995)]{impagliazzo1995hard}
Impagliazzo, R.
\newblock Hard-core distributions for somewhat hard problems.
\newblock In \emph{Proceedings of IEEE 36th Annual Foundations of Computer
  Science}, pp.\  538--545. IEEE, 1995.

\bibitem[Kale(2007)]{kale2007boosting}
Kale, S.
\newblock Boosting and hard-core set constructions: a simplified approach.
\newblock In \emph{Electronic Colloquium on Computational Complexity (ECCC)},
  volume~14, pp.\  131. Citeseer, 2007.

\bibitem[Kalyanasundaram \& Schintger(1992)Kalyanasundaram and
  Schintger]{kalyanasundaram1992probabilistic}
Kalyanasundaram, B. and Schintger, G.
\newblock The probabilistic communication complexity of set intersection.
\newblock \emph{SIAM Journal on Discrete Mathematics}, 5\penalty0 (4):\penalty0
  545--557, 1992.

\bibitem[Kane et~al.(2019)Kane, Livni, Moran, and
  Yehudayoff]{kane2019communication}
Kane, D., Livni, R., Moran, S., and Yehudayoff, A.
\newblock On communication complexity of classification problems.
\newblock In \emph{Conference on Learning Theory}, pp.\  1903--1943. PMLR,
  2019.

\bibitem[Kushilevitz \& Nisan(1996)Kushilevitz and
  Nisan]{kushilevitz_nisan_1996}
Kushilevitz, E. and Nisan, N.
\newblock \emph{Communication Complexity}.
\newblock Cambridge University Press, 1996.
\newblock \doi{10.1017/CBO9780511574948}.

\bibitem[Lazarevic \& Obradovic(2001)Lazarevic and
  Obradovic]{lazarevic2001distributed}
Lazarevic, A. and Obradovic, Z.
\newblock The distributed boosting algorithm.
\newblock In \emph{Proceedings of the seventh ACM SIGKDD international
  conference on Knowledge discovery and data mining}, pp.\  311--316, 2001.

\bibitem[Long \& Servedio(2011)Long and Servedio]{long2011algorithms}
Long, P. and Servedio, R.
\newblock Algorithms and hardness results for parallel large margin learning.
\newblock \emph{Advances in Neural Information Processing Systems},
  24:\penalty0 1314--1322, 2011.

\bibitem[Long \& Servedio(2010)Long and Servedio]{long2010random}
Long, P.~M. and Servedio, R.~A.
\newblock Random classification noise defeats all convex potential boosters.
\newblock \emph{Machine learning}, 78\penalty0 (3):\penalty0 287--304, 2010.

\bibitem[Rao \& Yehudayoff(2020)Rao and Yehudayoff]{rao_yehudayoff_2020}
Rao, A. and Yehudayoff, A.
\newblock \emph{Communication Complexity: and Applications}.
\newblock Cambridge University Press, 2020.
\newblock \doi{10.1017/9781108671644}.

\bibitem[Razborov(1990)]{razborov1990distributional}
Razborov, A.~A.
\newblock On the distributional complexity of disjointness.
\newblock In \emph{International Colloquium on Automata, Languages, and
  Programming}, pp.\  249--253. Springer, 1990.

\bibitem[Schapire \& Freund(2013)Schapire and Freund]{schapire2013boosting}
Schapire, R.~E. and Freund, Y.
\newblock Boosting: Foundations and algorithms.
\newblock \emph{Kybernetes}, 2013.

\bibitem[Vapnik \& Chervonenkis(1971)Vapnik and Chervonenkis]{vapnik71uniform}
Vapnik, V.~N. and Chervonenkis, A.~Y.
\newblock On the uniform convergence of relative frequencies of events to their
  probabilities.
\newblock \emph{Theory of Probability and its Applications}, 16\penalty0
  (2):\penalty0 264--280, 1971.

\bibitem[Vempala et~al.(2020)Vempala, Wang, and
  Woodruff]{vempala2020communication}
Vempala, S.~S., Wang, R., and Woodruff, D.~P.
\newblock The communication complexity of optimization.
\newblock In \emph{Proceedings of the Fourteenth Annual ACM-SIAM Symposium on
  Discrete Algorithms}, pp.\  1733--1752. SIAM, 2020.

\bibitem[Yao(1979)]{yao1979some}
Yao, A. C.-C.
\newblock Some complexity questions related to distributive computing
  (preliminary report).
\newblock In \emph{Proceedings of the eleventh annual ACM symposium on Theory
  of computing}, pp.\  209--213, 1979.

\bibitem[Zinkevich et~al.(2010)Zinkevich, Weimer, Smola, and
  Li]{zinkevich2010parallelized}
Zinkevich, M., Weimer, M., Smola, A.~J., and Li, L.
\newblock Parallelized stochastic gradient descent.
\newblock In \emph{NIPS}. Citeseer, 2010.

\end{thebibliography}
\bibliographystyle{icml2022}




\end{document}